\PassOptionsToPackage{preprint}{tmlr}
\documentclass[10pt]{article}

\usepackage{tmlr}

\usepackage[utf8]{inputenc}
\usepackage{amsmath}
\usepackage{mathtools}
\usepackage{amssymb}
\usepackage{amsthm}
\usepackage{graphicx}
\graphicspath{{figures/}}
\usepackage{booktabs}
\usepackage{array}
\usepackage{xcolor}
\usepackage{hyperref}

\def\month{MM}   % camera-ready only
\def\year{YYYY}  % camera-ready only

\hypersetup{
  colorlinks=true,
  linkcolor=blue!50!black,
  citecolor=blue!50!black,
  urlcolor=blue!50!black,
}

\theoremstyle{plain}
\newtheorem{theorem}{Theorem}
\newtheorem{lemma}{Lemma}
\newtheorem{proposition}{Proposition}
\theoremstyle{definition}
\newtheorem{assumption}{Assumption}

\title{Unlearning as Distribution Restoration:\\
A Controlled Counterfactual Study, a Validated Selective Screen,\\
and the Limits of Oracle-Free Certification}

\author{\name Sen Yang \email sy2576@stern.nyu.edu \\
      \addr Stern School of Business \\
      New York University
      \AND
      \name Yuen-Hei Yeung \email yy@nyu.edu \\
      \addr Courant Institute of Mathematical Sciences \\
      New York University}

\begin{document}
\maketitle

\begin{abstract}
Machine unlearning is commonly evaluated by matching a retrained oracle on trained
probes, but in a controlled nonce-fact testbed we find that this criterion can
favor methods that still \emph{retain held-out knowledge}: candidates it rates
adequate score held-out forget facts $-2.82$ nats (cluster-CI $[-3.16,-2.48]$)
\emph{below} the never-learned level. We recast unlearning as restoration to a
\textbf{matched retraining reference} and audit both oracle-free screens and
certificate-style criteria across $45$ model--seed cells spanning five open
architecture families. The retraining reference itself \textbf{falsifies an
absolute retain/round-trip certificate}: even the injected model, which retains
$R$ by construction, fails the fixed retain threshold in $41/45$ cells (i.e.\ passes in only $4/45$) and its
own round-trip in $31/45$, and the reference fully certifies in only $1/45$,
because the thresholds lie below the task's operating point and the round trip
carries a nonzero reacquisition floor that grows with the reacquisition budget. A
base-anchored held-out \textbf{screen} remains a strong \emph{selective necessary}
test: on a sealed challenge suite it rejects the injected model in $45/45$ cells
and accepts the reference in $44/45$, and it partially detects entity-routing
suppression through a distributional seam ($35/45$). A \textbf{damage-relative
recalibration}, with thresholds set by the reference's own operating point and the
replication noise of independent retraining draws, certifies a small candidate
subset in $15/45$ cells (concentrated in the noisiest-replication family);
where it does not abstain, its picks lie within retraining noise ($0.80$ nats) on
the held-out axes it optimizes, while the common trained-probe criterion sits
$5.17$ nats away on axes it never optimized (same cells; a supporting comparison,
not a head-to-head benchmark). Finally, adversarial challenge models show
that \emph{forward-only certification is not sound}: a fixed-magnitude logit
suppression attack ($-10$ on the forget-answer logits) defeats the full forward battery in $12/45$ cells (Clopper--Pearson upper bound $39.6\%$),
motivating a finite-query impossibility boundary and scoping our method as an
\emph{empirical selective test for methods-as-produced} rather than a universal
certificate. An identifiability theorem delimits which facts admit an oracle-free
forget threshold at all, with TOFU as the predicted boundary case.
\end{abstract}

% -------------------------------------------------------------------
\section{Introduction}
\label{sec:intro}

Unlearning is evaluated by forget-efficacy and retain-utility, but both can be
satisfied by \emph{suppression} that diverges from the true retrained model. The
retrained (``retain-only'') model is the gold reference but is usually
unavailable. We ask two questions: (i)~\emph{what does good unlearning look like
relative to a matched retraining reference?} and (ii)~\emph{can we tell good from
bad unlearning without that reference?} A distinctive feature of our answer is
that we turn the audit on our own instruments: we run the retraining reference
itself through every criterion we propose, and report which criteria it can and
cannot pass.

\medskip\noindent Our contributions are:
\begin{enumerate}
  \item \textbf{Protocol critique.} In a controlled dose-matched injection testbed
  with a matched counterfactual retraining reference, the common trained-probe
  oracle-KL criterion \emph{rewards residual knowledge}: candidates it rates
  adequate score held-out forget facts $-2.82$ nats (cluster-CI $[-3.16,-2.48]$)
  below the never-learned level, and its preferred selections sit $5.17$ nats from
  the reference's held-out position.
  \item \textbf{Absolute certification falsified by its own reference.} Fixed
  retain/round-trip thresholds sit below the injected model's operating point
  (it fails them in $41/45$ and $31/45$ cells); the retraining reference certifies
  in $1/45$. The round-trip floor is largely reacquisition-budget drift
  (three of five families close their own loop at short budgets), with one family
  showing intrinsic path-dependence.
  \item \textbf{A validated selective screen.} A base-anchored held-out rank
  screen, audited on a sealed challenge suite of known-label models: it rejects
  the injected model in $45/45$ cells, accepts the reference in $44/45$, and
  partially detects entity-routing suppression through a distributional seam
  ($35/45$). It is a \emph{necessary} test with measured sensitivity, not a
  sufficiency certificate.
  \item \textbf{Damage-relative recalibration.} Re-anchoring thresholds to the
  reference's own operating point, with tolerances set by the replication noise of
  independent retraining draws, yields a selective partial positive: certification
  in $15/45$ cells, abstention elsewhere, and non-abstaining picks within
  retraining noise of the reference on held-out axes ($0.80$ vs $5.17$ for the
  trained-probe criterion on the same cells; a supporting comparison, not a
  head-to-head benchmark, since the selector optimizes the recalibrated axes).
  \item \textbf{An adversarial boundary.} On the same challenge suite,
  a fixed-magnitude logit suppression attack defeats the full forward battery in $12/45$
  cells (Clopper--Pearson upper bound $39.6\%$): forward-only certification is not
  adversarially sound, so our guarantee is scoped to \emph{methods-as-produced},
  and a finite-query impossibility argument explains why.
  \item \textbf{An identifiability theorem} delimiting when oracle-free selection
  is possible at all, with TOFU as its predicted boundary, plus a finite-sample
  certifiability bound on any rank-test forget screen.
\end{enumerate}

% -------------------------------------------------------------------
\section{Controlled Testbed with a Matched Retraining Reference}
\label{sec:testbed}

\paragraph{Injection.} From a base model $M_0$, we continue-pretraining to
co-acquire a \textbf{forget set $F$} and a \textbf{retain-acquired set $R$} of
nonce facts (dose-matched; wikitext retain mix), yielding $M_{\text{inj}}$. We
reuse the controlled nonce-injection \emph{substrate} introduced for
acquisition-diversity experiments; the present paper studies a distinct
evaluation problem (whether unlearning candidates restore the matched
retraining distribution) with a different intervention variable, target
estimand, and failure mode. To avoid double-counting across that companion work,
we note that the shared element is the injection methodology only: the $45$
model--seed cells, the $820$ unlearning candidates, the forget/retain facts, and
the held-out and sealed challenge (\textsc{audit}; Section~\ref{sec:panel}) probe pools of this paper are constructed for
restoration evaluation and are not reused as, nor counted as, replications of any
acquisition-diversity result (though both lines use \texttt{gate\_proj} masks,
here as one \emph{unlearning} candidate family, there as a \emph{deletion-cost}
instrument).

\paragraph{Matched retraining reference.} $M_{\text{oracle}}$ replays the
identical stream \textbf{without $F$} (inject $R$ only): a model that never saw
$F$ but has the same retain competence. We call it a \emph{matched retraining
reference} rather than an ``exact oracle'': bf16 training is nondeterministic,
and rebuilding the reference from independent draws shifts per-fact held-out
deltas by up to $\delta_{\mathrm{equiv}} = 0.93$ nats at the P90 (per-family
range $0.82$--$2.92$; Section~\ref{sec:audit}). Any single draw is therefore one
realization of retraining, not a unique counterfactual, and
$\delta_{\mathrm{equiv}}$ is the scale below which two retrainings are
indistinguishable, a quantity the rest of the paper uses repeatedly.

\paragraph{Why mixed acquisition.} If the stream contained only $F$, full
checkpoint-rollback ($=M_0$) would trivially ``forget''; co-acquiring $R$ makes
rollback destroy retain (retain-NLL $6.6$ vs oracle $2.4$), so a method must do
real work.

\paragraph{Metrics.} \emph{trained-probe oracle-KL} $= \mathrm{KL}(\text{candidate}
\,\|\, M_{\text{oracle}})$ on $F{+}R$ \emph{trained} probes, the criterion the
field uses as ground truth, which Section~\ref{sec:oraclefail} shows can reward
residual knowledge; \emph{held-out} forget-NLL on $F$ paraphrases never used in any
training or unlearning loss (the surface our screens operate on); retain-NLL on
$R$; general damage on wikitext. We keep these roles separate throughout: the
trained-probe criterion is \emph{audited}, not assumed; held-out deltas anchored to
$M_0$ are the primary evaluation surface; and forget-NLL / retain-NLL alone are
diagnostics a suppression method can also satisfy.

\paragraph{Models.} Qwen2.5-0.5B/1.5B, SmolLM2-360M/1.7B, Pythia-410M/1.4B,
GPT-2-medium/large, OLMo-2-1B (five families; SwiGLU and GELU). Real unlearning
methods (below) are architecture-agnostic; the \texttt{gate\_proj} mask variant
applies only to SwiGLU models. We deliberately span diverse small-to-mid open
families rather than a single large model: the audit requires a \emph{matched
retraining reference} (and its independent redraws) and a re-acquire round trip
for \emph{every} model--seed cell ($45$ cells), which is feasible at this scale
and gives cross-architecture evidence; scaling the evaluation to larger
instruction-tuned models is future work. TOFU (on Phi-1.5) is included not as a comprehensive
real-world unlearning sweep but as the predicted \emph{boundary} case for
identifiability (Section~\ref{sec:identifiability}).

% -------------------------------------------------------------------
\section{Distribution Restoration as the Unlearning Target}
\label{sec:diagnostic}

\paragraph{Methods compared.} Each starts from $M_{\text{inj}}$, at matched
compute, with full forget/retain Pareto fronts: path-specific \textbf{KL
reversion} (forget${\to}M_0$, retain${\to}M_{\text{inj}}$), \textbf{task-vector
negation} ($M_{\text{inj}} - c\cdot(M_{F\text{only}}-M_0)$), \texttt{gate\_proj}
\textbf{mask} reversion; versus \textbf{gradient ascent (GA)} and
\textbf{NPO}~\citep{zhang2024npo}, each $+$retain; plus \textbf{checkpoint
rollback} ($=M_0$).

\paragraph{Result (controlled, matched forgetting).} Restoration methods
(KL-reversion, task-vector) reach oracle-KL ${\approx}\,2$--$4$; suppression
(GA/NPO) ${\approx}\,5$--$7$ even after a \textbf{$32$-configuration fair sweep}
(lr $\times$ $\beta$ $\times$ retain-weight): they over-forget (forget-NLL far
past the oracle) and collapse retain (retain-NLL ${\to}\,0 \ll$ oracle), i.e.\
they \emph{distort} rather than restore. Checkpoint rollback destroys $R$. Across
$3$ seeds, at matched forgetting restoration is ${\sim}2\times$ closer to the
oracle than suppression (Table~\ref{tab:diag}).

\paragraph{Interpretation.} Deletion that approximates the retrained model is
\emph{restoration} of the pre-acquisition distribution on $F$ while preserving the
post distribution on $R$, not indiscriminate loss-increase.

\subsection{The trained-probe criterion rewards residual knowledge}
\label{sec:oraclefail}

The diagnostic above uses proximity to the retraining reference as measured on
\emph{trained} probes: the phrasings that appeared in injection and unlearning
losses. Auditing that criterion against \emph{held-out} paraphrases exposes a
systematic failure: across the $45$-cell matrix, candidates the trained-probe
criterion rates adequate score held-out forget facts a mean of $\mathbf{-2.82}$
nats (cluster-bootstrap CI by cell $[-3.16, -2.48]$, entirely below zero;
per-fact median $-2.91$, $n=948$) \emph{below} the never-learned level. They
demonstrably retain paraphrase-recoverable knowledge of $F$ while matching the
reference on the phrasings the criterion happens to query. Matching the retrained
model on trained probes is therefore \emph{not} evidence of restoration. That
observation motivates every held-out, base-anchored instrument in the rest of
this paper.

\paragraph{The method dichotomy.} The same held-out surface splits the method
space cleanly. Loss-based methods (KL-reversion, GA, NPO) suppress only the
phrasings in their loss: $77$--$87\%$ of their candidates are excluded by the
held-out screen as \emph{retaining} $F$. Weight-space methods (\texttt{gate\_proj}
mask, task-vector) generalize forgetting across phrasings but damage retain.
Under the absolute (fixed-threshold) bar (screen $\wedge$ retain $\le 0.5$ $\wedge$
round-trip $\le 0.5$), only $3/820$ candidates matrix-wide satisfy all three,
an outcome that Section~\ref{sec:audit} traces to the bar itself rather than
to the methods.

\begin{figure}[t]
\centering
\includegraphics[width=0.62\linewidth]{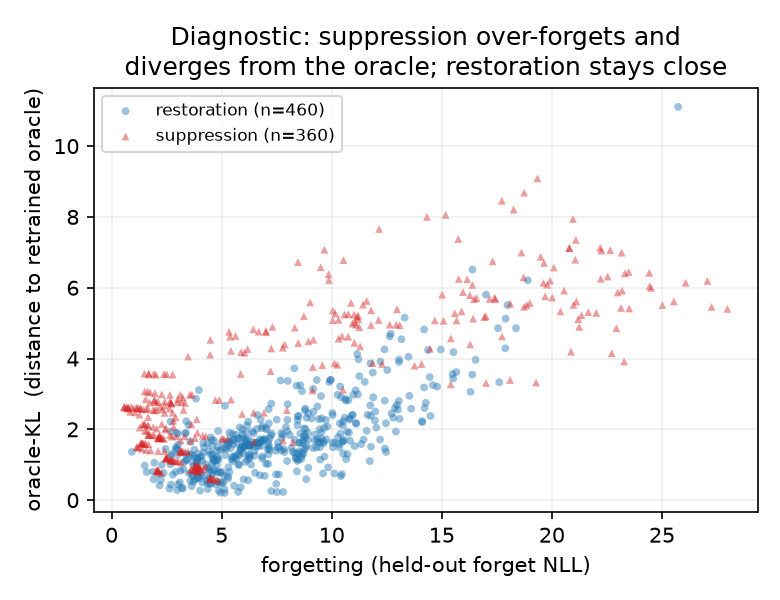}
\caption{Diagnostic (pooled candidates, 45 matrix cells). Suppression methods
(GA/NPO) over-forget \emph{and} diverge from the matched retraining reference (high
trained-probe KL); restoration methods (task-vector, KL-reversion) stay close to the
oracle across the forgetting range.}
\label{fig:diag}
\end{figure}

\begin{table}[t]
\centering
\caption{Diagnostic: oracle-KL of restoration vs suppression methods at matched
forgetting, in the controlled testbed (across $3$ seeds and a $32$-configuration
fair sweep) and on TOFU's real retrained reference. Lower is closer to the retraining reference. Restoration stays ${\sim}2\times$ closer to the oracle throughout.}
\label{tab:diag}
\begin{tabular}{lcc}
\toprule
setting & restoration (KL-reversion, task-vector) & suppression (GA, NPO) \\
\midrule
controlled (oracle-KL) & ${\approx}\,2$--$4$ & ${\approx}\,5$--$7$ \\
TOFU (oracle-KL)       & ${\approx}\,0.1$    & $0.7$--$1.2$ \\
\bottomrule
\end{tabular}
\end{table}

% -------------------------------------------------------------------
\section{Oracle-Free Round-Trip Certificate}
\label{sec:certificate}

\paragraph{Idea.} A genuine counterfactual deletion closes the loop
\emph{acquire $\to$ delete $\to$ re-acquire}: after re-acquiring $F$, a
truly-restored model returns to ${\approx}\,M_{\text{inj}}$ on retain; a
suppression model's collateral damage \textbf{persists}. We define the oracle-free
\textbf{round-trip residual}
\[
  \mathrm{cert}(C) = \mathrm{KL}\!\bigl(\mathrm{reacquire}_F(C) \,\big\|\,
  M_{\text{inj}}\bigr) \text{ on retain } + \text{ wikitext},
\]
which uses only $M_{\text{inj}}$ and the forget data, with no oracle.

\paragraph{Forget-qualification: the base-anchored screen (necessary).} The
round-trip residual ranks \emph{restoration quality}; it must be paired with a
check that the candidate actually forgot. The principled form is an
\textbf{under-forgetting exclusion test} on \emph{base-anchored deltas}, and we
state explicitly which model scores what, because two subtly different designs are
wrong. For candidate $C$, forget fact $i$, and never-learned probe fact $j$
(facts of the same nonce type from unused seeds, never injected anywhere), define
\[
  \Delta^F_i = \mathrm{NLL}_{C}(F_i) - \mathrm{NLL}_{M_0}(F_i),
  \qquad
  \Delta^P_j = \mathrm{NLL}_{C}(P_j) - \mathrm{NLL}_{M_0}(P_j),
\]
both on \emph{held-out} paraphrases and both scored by the \emph{candidate
itself}, anchored to the base model $M_0$. A one-sided exact rank (Mann--Whitney)
test of $\{\Delta^F_i\}$ against $\{\Delta^P_j\}$, Holm-corrected at
$\alpha=0.05$ over the candidate pool of the cell, excludes candidates whose
forget deltas are significantly \emph{below} their own probe deltas (residual
knowledge). Anchoring to $M_0$ matters twice over: $M_0$ never learned $F$
\emph{or} $P$, so it is symmetric in the two sets and oracle-free, and the
per-fact anchoring blocks two failure modes of naive designs.
Scoring probes with $M_{\text{inj}}$ compares NLLs \emph{across models}, so
global damage masquerades as forgetting, while anchoring deltas to
$M_{\text{inj}}$ injects the very $F$-vs-$P$ label under test (its baseline gap
between learned $F$ and never-seen $P$ \emph{is} the label). Probes and the
retraining reference are never used to score candidates.

\paragraph{When the check is possible at all.} This test is usable only if the
design supplies enough probes, and the constraint is combinatorial rather than
statistical:

\begin{proposition}[Finite-sample certifiability]
\label{prop:certifiability}
Fix a cell with $K$ candidates, $m$ forget items, and $n$ matched never-learned
probes. For the one-sided \emph{exact} rank test, the smallest attainable
$p$-value is $1/\binom{m+n}{m}$. Holm's step-down rejects its most significant
hypothesis only if that $p$-value is at most $\alpha/K$. Hence \emph{no candidate
can be excluded, at any effect size}, unless
\[
  \binom{m+n}{m}\;\ge\;K/\alpha .
\]
\end{proposition}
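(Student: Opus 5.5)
The plan has three parts: a permutation argument for the smallest exact $p$-value, a reading of Holm's first step, and a combination of the two. Throughout, the continuous per-fact deltas are assumed tie-free (generic). With ties, the exact conditional test only gets coarser, so its minimum attainable $p$-value can only grow and the bound stays valid.

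\emph{Step 1 (minimal exact $p$-value).} Under the exact null the $m+n$ pooled deltas $\{\Delta^F_i\}\cup\{\Delta^P_j\}$ are exchangeable. Hence every one of the $\binom{m+n}{m}$ ways of choosing which ranks belong to the forget sample is equally likely, each with probability $1/\binom{m+n}{m}$. The one-sided Mann--Whitney statistic $U$ counts the pairs $(i,j)$ with $\Delta^F_i<\Delta^P_j$. Its exact $p$-value is the null probability that $U$ is at least as extreme as observed. The most extreme value $U=mn$ is attained by exactly one rank configuration: all $m$ forget deltas occupy the lowest $m$ ranks. Any observed configuration therefore has $p\ge \Pr_0(U=mn)=1/\binom{m+n}{m}$, with equality at that configuration. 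This bound holds regardless of the effect size, because the test sees only ranks, and ranks saturate once the two samples separate completely.

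\emph{Step 2 (Holm's first step).} Holm's step-down procedure orders the $K$ $p$-values as $p_{(1)}\le\dots\le p_{(K)}$. It rejects anything at all only if $p_{(1)}\le \alpha/K$, and otherwise stops with no rejections. In this cell, ``excluding a candidate'' means rejecting its hypothesis, so any exclusion requires $p_{(1)}\le\alpha/K$.

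\emph{Step 3 (combine).} By Step 1, $p_{(1)}\ge 1/\binom{m+n}{m}$ for every data realization. So if $1/\binom{m+n}{m}>\alpha/K$, that is, if $\binom{m+n}{m}<K/\alpha$, no rejection is possible, which is the contrapositive of the claim. The only point needing care is Step 1: one must confirm that the exact null distribution is uniform over rank subsets and that the minimum is attained by a single configuration, which is where the combinatorial rather than statistical nature of the constraint shows up.
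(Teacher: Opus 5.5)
Your proposal is correct and takes the same approach as the paper, which justifies the proposition inline. The steps match: the minimal exact $p$-value $1/\binom{m+n}{m}$ comes from the unique most extreme rank configuration under the permutation null; Holm's first step requires $p_{(1)}\le\alpha/K$; and the contrapositive combination gives the bound.

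Your tie remark is also consistent with the paper. The paper adds only that, in tied cases, the implementation switches to a normal approximation that is not floored by $1/\binom{m+n}{m}$. So the bound constrains the exact test, not that code path.
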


\noindent The bound is a property of the test's null lattice, not of the data: it
holds however badly a candidate under-forgets. With $\alpha=0.05$, our pools
require $\binom{m+n}{m}\ge400$ in SwiGLU cells ($K=20$) and $\ge320$ in GELU cells
($K=16$, which lack the \texttt{gate\_proj} mask arm). At $m=n=4$, $\binom{8}{4}=70$ falls short of both, so the exact untied screen is
\textbf{vacuous by construction} (it excludes $0$ of $820$ candidates). The matrix
therefore uses $n=16$ probes at the same dose ($m=4$): $\binom{20}{4}=4845$ clears both pools, the attainable
floor is $1/4845 \approx 2.06\times10^{-4} < \alpha/K$, and the powered screen
excludes $498$ of $820$ candidates. We emphasize the correct reading of the
bound: it guarantees attainable $p$-value \emph{resolution} sufficient for Holm
rejection, not power against a specified effect size. TOFU supplies $m=n=40$,
where $\binom{80}{40}\approx10^{23}$ clears the bound by twenty orders of
magnitude, and the screen is decisive there ($16/16$ candidates excluded in every
seed).

\paragraph{Probes are the free knob.} Read as a design rule, the bound separates
two costs that are easy to conflate. Enlarging
the forget set raises $m$, but it also changes the injection dose, so it perturbs
the very system under study. Enlarging the probe set raises $n$ without touching
that system: never-learned probes enter only as forward passes against
$M_{\mathrm{inj}}$, and never enter injection, unlearning, or the residual
$\hat c$. Raising $n$ is not statistically inert: a larger probe set changes
the sampled reference distribution the forget losses are ranked against, so it
changes the comparison and not merely the resolution of the lattice. Even so, it
leaves the intervention, the candidates, and every candidate-side quantity
untouched. If probes are matched one-per-forget-item ($n=m$), the requirement
$\binom{2m}{m}\ge K/\alpha$ forces $m\ge6$; holding our dose fixed at $m=4$,
$n=8$ is the smallest probe count sufficient in both pools
($\binom{11}{4}=330$ clears $320$ but not $400$; $\binom{12}{4}=495$ clears both);
we adopt $n=16$, comfortably above the minimum, and store per-item values so every
$n\le16$ sub-panel is recoverable by subsetting. The asymmetry is the practical
content of the proposition: a screen can be underpowered purely because probes
were treated as scarce when they were free.

\paragraph{Implementation audit.} Two details make the bound auditable rather than
nominal. First, the exact lattice is what our implementation actually evaluates in
the regime the bound is about: we verified exactness at $m=4$ for every $n\le16$
we use, and in the matched case up to $m=8$, beyond which it substitutes a normal
approximation. At $m=4$, $n=16$ the attainable minimum is exactly
$1/\binom{20}{4} = 2.06\times10^{-4}$, which is also the smallest $p$-value
observed anywhere in the $820$ candidates, attained by $238$ of them. This is a
data-level audit of the lattice: the test operates on \emph{fact-level} scalars
(held-out template NLLs are averaged \emph{within} each fact before ranking), and
a paired per-fact sign design, whose floor is $2^{-4}=0.0625$, could not produce
these $p$-values. Second, ties are the one caveat: they push the implementation
onto the approximation, which is \emph{not} floored by $1/\binom{m+n}{m}$, so in
tied cases the bound constrains the exact test rather than the code path taken;
across all $820$ candidates no tied case attained a smaller minimum. TOFU's
$m=n=40$ likewise falls in the approximation's regime, so its
$p\approx7\times10^{-15}$ is the approximation's rather than the exact lattice's.
It is far below $\alpha/K$ either way, so the screen's decisiveness there does
not rest on the exact argument.

\paragraph{Selective certification (abstention).} We therefore treat
certification as \emph{selective}. The selector returns a certified choice only
when the forget-qualification threshold is identifiable from the never-learned
reference (Section~\ref{sec:identifiability}), when
Proposition~\ref{prop:certifiability} is satisfied so the screen can reject at
all, and when a candidate clears the reference-relative bar of
Section~\ref{sec:recalib}; otherwise it reports \textbf{uncertified} rather than
silently degrading to residual-only ranking. The failure modes are distinct and
we observe each: TOFU fails \emph{identifiability} (the reference is
unreachable, so \emph{every} candidate is excluded; Section~\ref{sec:tofu}); the
underpowered $n=4$ design fails \emph{power} (resolved at $n=16$);
and the recalibrated bar abstains in $30/45$ matrix cells. A certificate that
cannot say ``uncertified'' would report these cases as success.

\paragraph{Screen-pass is not restoration (equivalence).} Because the screen is
one-sided, passing it means only ``not demonstrably retaining'': non-rejection is
not equivalence to the never-learned state, and with a weak test it would be
vacuously easy. We quantify the gap. Among the $322$ screen survivors of the
matrix, the median absolute gap between forget and probe deltas is
$\mathbf{1.93}$ nats (cell-clustered bootstrap CI $[1.68, 2.34]$), well above the
retraining-replication tolerance $\delta_{\mathrm{equiv}}=0.93$; only
$28.0\%$ of survivors (CI $[22.6\%, 34.0\%]$) are TOST-equivalent to
never-learned at that tolerance. The screen is a \emph{necessary} filter, and the
recalibrated bar of Section~\ref{sec:recalib} therefore adds an explicit
equivalence axis rather than treating survival as success.

\paragraph{Selector (oracle-free, at a glance).} Given candidate unlearned models
$\{C\}$ (the \emph{delete} outputs of the methods under comparison), each starting from
$M_{\text{inj}}$:
\begin{enumerate}\itemsep2pt
  \item \textbf{Re-acquire:} fine-tune each $C$ on the forget set $F$ to convergence,
    giving $\mathrm{reacquire}_F(C)$.
  \item \textbf{Measure residual:} $\mathrm{cert}(C) = \mathrm{KL}\!\bigl(\mathrm{reacquire}_F(C)\,\|\,M_{\text{inj}}\bigr)$
    on retain $+$ wikitext.
  \item \textbf{Qualify:} exclude any $C$ that fails the under-forgetting test against
    matched never-learned probes (residual knowledge).
  \item \textbf{Select:} among survivors, return $\arg\min_C \mathrm{cert}(C)$.
\end{enumerate}
No oracle, gold posterior, or probe labels enter the selection; only $M_{\text{inj}}$
and the forget data are used.

\paragraph{Selector cost.} The certificate's cost is a per-candidate
re-acquisition fine-tune (each $C$ is briefly fine-tuned on $F$ to convergence),
so selector compute scales linearly in the number of candidates times the
re-acquisition cost; it uses no retrained oracle and no held-out probes at
selection time. This makes it suited to offline method-selection studies rather
than per-query deployment.

\paragraph{Scope of the guarantee (what ``certificate'' does and does not mean).}
We use \emph{certificate} in a deliberately narrow, selection-theoretic sense, and
state the scope explicitly because the term is easily over-read. (i)~\emph{It is a
selector, not a deletion or privacy guarantee}: it ranks a \emph{fixed, given pool}
of candidate unlearned models and says which one is closest to the counterfactual
retrained model; it certifies nothing about an individual model in isolation, and
makes no differential-privacy or right-to-erasure claim. (ii)~\emph{Offline
setting}: the evaluator holds $M_{\text{inj}}$, the forget data $F$, and the
candidate pool, and may fine-tune; the retrained oracle is never available. (iii)
\emph{Non-adaptive methods}: candidates are produced without knowledge of the
certificate. A bilevel adversary that optimizes a candidate \emph{against} the
round-trip residual is outside the present analysis and is the most natural attack
on the construction (Section~\ref{sec:lim}). (iv)~\emph{Assumption-bound}:
oracle-consistency holds under the threshold-separation and residual-soundness
assumptions stated in Section~\ref{sec:identifiability}, and only where the
forget-qualification threshold is identifiable from the observable never-learned
reference; the finite-sample test is an under-forgetting screen, not a proof of
adequacy for any single candidate.

% -------------------------------------------------------------------
\section{The Matched-Retraining Audit: the Absolute Bar Fails Its Own Reference}
\label{sec:audit}

The natural certificate pairs the screen with absolute thresholds (retain
damage $\le 0.5$ nats and round-trip residual $\le 0.5$), and under that frozen
bar only $3/820$ candidates certify matrix-wide
(Section~\ref{sec:oraclefail}). The bar must first survive its own reference: we run \textbf{the
retraining reference, and the injected model, through the full battery,
component-wise, in every cell.}

\paragraph{Component-wise audit of the reference and injected model.} The reference passes the base-anchored
screen in $43/45$ cells, the one component it reliably clears. It passes
\emph{retain} $\le 0.5$ in only $11/45$ and the round-trip threshold in $7/45$;
it fully certifies in $\mathbf{1/45}$ cells. The diagnosis is not that retraining
is poor but that \textbf{the thresholds sit below the task's operating point}:
$M_{\text{inj}}$ itself, which retains $R$ perfectly by construction,
passes the absolute retain threshold in only $\mathbf{4/45}$ cells (median
retain-NLL $1.20$, range $0.24$--$4.45$: models simply do not memorize the nonce
retain facts below ${\sim}1$ nat at this dose), and $M_{\text{inj}}$'s
\emph{own} round-trip self-closure (re-acquiring $F$ into the model that
already knows $F{+}R$ and measuring the KL back to itself) exceeds $0.5$ in
$\mathbf{31/45}$ cells (median floor $0.92$). A candidate was being asked to pass
thresholds that the model it restores \emph{toward} cannot pass. Under a
consequence table frozen before the run, this outcome retires the absolute bar:
``no method certifies'' was a statement about the bar, not the methods.

\paragraph{Replication noise of the reference.} Rebuilding the reference
from three independent draws in one representative cell per family gives the
per-family indistinguishability tolerance $\delta_f$ (P90 per-fact held-out
delta gap): $0.82$--$2.92$ nats across families, median $0.93$. Draw-to-draw
spreads of retain-NLL and round-trip residual supply the reference-noise margins
$\varepsilon_f$ used below. The wide family spread is itself a finding: a single
global tolerance mis-scales at least some families, so all reference-relative
results below are computed per family, with a global-minimum sensitivity row.

\paragraph{The round-trip floor is largely budget drift.} Re-measuring
$M_{\text{inj}}$'s self-closure at reacquisition budgets
$\{15, 30, 45, 75\}$ steps shows the floor rising near-monotonically with budget
in every family; at the shortest budget three of five families close their own
loop below $0.5$ (gpt2 $0.05$, SmolLM2 $0.12$, Pythia $0.44$), while OLMo-2
never does (floor $1.4$--$2.1$ at every budget: intrinsic path-dependence).
The round-trip residual at a frozen budget is therefore dominated by
\emph{reacquisition drift} common to every model put through the protocol.
This is why the recalibration below subtracts the per-cell floor:
candidates and floor share the budget, so the subtraction removes the
common-mode drift.

% -------------------------------------------------------------------
\section{A Validated Selective Screen and Its Adversarial Boundary}
\label{sec:panel}

A screen worth trusting must be validated as an instrument: measured true-accept
and false-accept rates on models whose ground truth is known by construction. We
freeze a sealed \textsc{audit} template pool (never used in any loss,
calibration, or selection; used exactly once, here) and pass a known-label
challenge panel through the battery blind, in every cell:
the base model $M_0$; the injected model $M_{\text{inj}}$; the retraining
reference; \textbf{logit suppression} (an additive $-10$ penalty on the
forget-answer token logits of $M_{\text{inj}}$: pure suppression with the
knowledge intact); an \textbf{entity router} (dispatch inputs containing a
forget-subject token sequence to $M_0$, all else to $M_{\text{inj}}$;
suppression that generalizes across phrasings); \textbf{embedding corruption}
(zeroing the forget subject/object embedding rows); and weight-space
interpolations $\lambda\, M_{\text{oracle}} + (1{-}\lambda) M_{\text{inj}}$.

\paragraph{Sensitivity and true-accept.} The screen rejects $M_{\text{inj}}$ in
$\mathbf{45/45}$ cells and accepts the reference in $\mathbf{44/45}$; acceptance
along the interpolation path rises monotonically toward the reference. On the
corrected screen the residual-based selection also never picks a suppression
method in the matrix ($0/45$; exact one-sided binomial upper bound $6.4\%$
per-cell, $28.3\%$ clustered by the nine architectures; we report the bound,
not ``never'').

\paragraph{A distributional seam partially detects routing.} The entity router
is behaviorally reference-like on every queried input, yet the screen rejects it
in $\mathbf{35/45}$ cells. The mechanism is a seam the router cannot avoid: its
forget-side responses come from $M_0$ (deltas exactly zero) while its probe-side
responses come from $M_{\text{inj}}$, whose drift on never-learned probes is
nonzero, and the base-anchored rank test reads that inconsistency as residual
structure. We call this \emph{base-delta drift-seam sensitivity}: the screen is
strictly stronger against routing than a naive paraphrase test, though the
router still passes in $10/45$ cells, a concrete, quantified witness for the
finite-query impossibility argument of Section~\ref{sec:identifiability}.

\paragraph{The adversarial boundary: forward-only certification is not sound.}
Logit suppression exposes the limit. The screen \emph{alone} false-accepts it in
$28/45$ cells (by design: a one-sided under-forgetting test cannot
penalize looking \emph{more} forgotten), and the equivalence axis of
Section~\ref{sec:recalib} catches most of those; but in $\mathbf{12/45}$ cells
the fixed-magnitude penalty happens to land the forget-answer NLL within the
family tolerance of the reference's position, and the \emph{entire forward
battery} accepts a model whose knowledge is intact. We report $12/45$
descriptively and by family (Qwen and OLMo cells dominate the evasions): the
$45$ cells are fixed, family-clustered, and heterogeneous, so this is not an
i.i.d.\ success rate; the one-sided
Clopper--Pearson upper bound ($39.6\%$) characterizes \emph{these} cells, not a
guarantee about future models. Catching such an attack requires the relearning
axes (the suppressed model relearns instantly) or white-box access, neither
available to a forward-only protocol. Embedding corruption, by contrast, is rejected almost everywhere on
retain damage (compatible with $M_{\text{inj}}$ in only $3/45$ cells): in a
nonce-token testbed, deleting embedding rows is a blunt distributional
intervention, not surgical suppression. The consequence is stated plainly in
every claim of this paper: our certification is an \textbf{empirical selective
test for methods-as-produced}, not an adversarially sound certificate.

% -------------------------------------------------------------------
\section{Damage-Relative Recalibration: a Selective Partial Positive}
\label{sec:recalib}

The audit's constructive consequence is a recalibrated bar whose rules were
frozen before the analysis ran: measure candidates \emph{relative to the
reference's operating point}, at the tolerance retraining itself cannot beat.
A candidate $C$ in a cell of family $f$ is \textsc{certified} iff
(i)~it passes the base-anchored screen; (ii)~\emph{forget equivalence}:
$|\mathrm{med}(\Delta^F(C)) - \mathrm{med}(\Delta^F(\text{reference}))| \le
\delta_f$, against the \emph{same-draw} reference arrays of that cell;
(iii)~\emph{retain damage} $\mathrm{NLL}_R(C) - (\mathrm{NLL}_R(M_{\text{inj}})
- \varepsilon_f) \le \delta_f$; and (iv)~\emph{round-trip damage}
$\mathrm{cert}(C) - (\mathrm{floor} - \varepsilon_f) \le \delta_f$, where
cross-draw references are shifted against the candidate by the reference-noise margins $\varepsilon_f$.
A degeneracy gate excludes candidates ``retaining better than
$M_{\text{inj}}$'' by more than noise (collapse onto $R$, not restoration).
The relearn-veto axis of the original bar is unavailable retrospectively
(stored candidates carry no relearning trajectories), so this is a
\emph{veto-less retrospective} certificate, stated as such.

\paragraph{Result.} $\mathbf{38/820}$ candidates certify, in $\mathbf{15/45}$
cells; the certificate abstains in the remaining $30$. The abstention is
selectivity by design, not failure. The \texttt{gate\_proj} mask family leads ($16/100$ candidates,
$16\%$), with task-vector, NPO, and GA at $4\%$ and full-parameter KL-reversion
at $0\%$ (it retains held-out $F$; the screen removes it). Certification volume
is tolerance-sensitive and we disclose its concentration: the Qwen family, whose
retraining replication is noisiest ($\delta_f = 2.92$), contributes $29$ of the
$38$; under a global conservative tolerance
($\delta = \min_f \delta_f = 0.82$) the count falls to $4$ candidates in $4$
cells. The robust claim is therefore not the count but the selection quality
below.

\begin{table}[t]
\centering
\caption{Selection quality on identical held-out axes, averaged over the $15$
cells where the recalibrated certificate does not abstain (same-cell
comparison). $\mathrm{feq}$: distance of the pick's held-out forget position
from the same-draw reference (nats); damage axes clipped at $0$;
trained-probe oracle-KL shown for reference (the criterion
Section~\ref{sec:oraclefail} audits). The recalibrated pick lies within
retraining noise ($\mathrm{feq} < \delta_f$) where the trained-probe criterion's
preferred candidate retains residual held-out knowledge $5.17$ nats from the
reference's position.}
\label{tab:r3}
\begin{tabular}{lcccc}
\toprule
selector (same $15$ cells) & feq & retain-dmg$^+$ & cert-dmg$^+$ & trained-probe KL \\
\midrule
recalibrated certificate pick & $\mathbf{0.795}$ & $0.729$ & $0.767$ & $3.555$ \\
absolute-bar pick             & $3.298$ & $0.990$ & $0.166$ & $2.103$ \\
trained-probe best            & $5.167$ & $0.838$ & $0.312$ & $1.551$ \\
\bottomrule
\end{tabular}
\end{table}

\paragraph{Selection quality (same-cell).} Table~\ref{tab:r3} compares three
selectors on the same cells and the same held-out axes. The comparison against
the trained-probe criterion is non-circular (that selector never optimized
these axes), and it lands $5.17$ nats from the reference's held-out position,
versus $0.80$ for the recalibrated pick: \emph{within} the family's retraining
noise. The two criteria disagree sharply about which candidates are good
(the recalibrated picks score \emph{worst} on trained-probe KL), and
Section~\ref{sec:oraclefail} says which to trust. We are deliberately careful
about what this table does and does not show. That the recalibrated pick scores
best on the recalibrated axes is \emph{partly definitional} (the selector
optimizes those axes), so we do \textbf{not} present $0.80$ vs $5.17$ as a
head-to-head selection benchmark, and a standalone selection claim would require
a second, untouched held-out probe suite with one candidate frozen per cell
\emph{before} that suite is scored (future work). The two claims that survive
this caveat, and that we do make, are narrower: (i) the common trained-probe
criterion selects candidates that sit $5.17$ nats from the reference on
held-out axes \emph{it never optimized} (a property of that criterion, not a
contest), and (ii) in its non-abstaining cells the recalibrated selector's
pick sits, in absolute terms, within the family's retraining-replication
tolerance $\delta_f$. The recalibration is thus a \emph{supporting} selective
positive under the audit-led framing, not the paper's load-bearing result; the
falsification and screen-validation contributions stand without it.

% -------------------------------------------------------------------
\section{External Validation on TOFU}
\label{sec:tofu}

\paragraph{Setup.} TOFU~\citep{maini2024tofu} forget10/retain90 on Phi-1.5;
$M_0 = $ phi-1.5, $M_{\text{inj}} = $ \texttt{tofu\_ft} (all TOFU),
$M_{\text{oracle}} = $ the official \texttt{tofu\_ft\_retain90} retrained
reference (never saw forget10).

\paragraph{Diagnostic holds on the real oracle.} Restoration (KL-reversion)
oracle-KL ${\approx}\,0.1$ vs suppression $0.7$--$1.2$; at matched forgetting
restoration is both closer to the oracle \textbf{and} assigned a lower round-trip
residual ($6/7$ pairs). The core scientific claim replicates outside the synthetic
testbed.

\paragraph{Certificate boundary: the selector abstains, and should.} TOFU is where
oracle-free \emph{qualification} fails, and the failure is instructive rather than
incidental. TOFU facts are \emph{style-inferable}: the retrained oracle itself only
reaches forget-NLL ${\approx}\,0.5$--$2$, while matched never-learned probes (novel
QA) sit at NLL ${\approx}\,8$. No achievable forgetting reaches the never-learned
reference, so the exclusion test rejects \emph{every} candidate, in all three
seeds ($16/16$ candidates excluded per seed; smallest $p\approx7\times10^{-15}$).

This is the opposite failure mode from the controlled matrix, and the two
together delimit when certification is possible at all. Here the test has ample
power ($m{=}n{=}40$, so Proposition~\ref{prop:certifiability} is satisfied by a
wide margin) and still excludes everything, because the \emph{reference itself} is
unreachable: identifiability fails. In the matrix the reference is informative but
the test has no power ($m{=}n{=}4$), so nothing can be excluded. Under our abstention
rule both cases return \textbf{uncertified}, for different reasons, and neither is
a tuning failure: the TOFU case is exactly what the identifiability analysis
predicts for inferable facts (Section~\ref{sec:identifiability}). The
restoration-vs-suppression \emph{diagnostic} against TOFU's real retrained oracle
is unaffected and remains the load-bearing TOFU claim.

% -------------------------------------------------------------------
\section{Identifiability Limits of Oracle-Free Selection}
\label{sec:identifiability}

We formalize when oracle-free selection is possible. Intuitively, a selector that
never sees the retrained oracle can only succeed if the oracle's notion of
``adequately forgotten'' is recoverable from oracle-free observables.

\paragraph{Setup}
A world \(w\in\mathcal W\) specifies retained and forgotten data, an injected model \(M_{\mathrm{inj}}(w)\), and the retrained oracle \(M_{\mathrm{or}}(w)\).  Let \(\{M_a(w):a\in\mathcal I\}\) be the candidate unlearned models, let \(d\) be a prescribed model-discrepancy measure, and define the candidate forget-NLL \(q_a(w)\), the oracle forget level \(\tau^\star(w)\coloneqq q_{\mathrm{or}}(w)\), and
\[
\mathcal A^\star(w)
 \coloneqq
 \left\{a\in\mathcal I:
 d\!\left(M_a(w),M_{\mathrm{or}}(w)\right)\leq\varepsilon
 \ \text{and}\ q_a(w)\geq\tau^\star(w)\right\}.
\]
Let \(\nu_w\) denote the population law of NLLs assigned by \(M_{\mathrm{inj}}(w)\) to never-learned probes, and let
\(
X(w)\coloneqq
\left(
 \{q_a(w)\}_{a\in\mathcal I},
 \nu_w,
 \{r_a(w)\}_{a\in\mathcal I}
\right),
\)
where \(r_a(w)\) is the round-trip residual of candidate \(a\).  An oracle-free selector is a deterministic map \(S\) satisfying \(S(w)=s(X(w))\) for some \(s\); it is oracle-consistent on \(\mathcal W\) if \(S(w)\in\mathcal A^\star(w)\) for every \(w\in\mathcal W\).  The threshold \(\tau^\star\) is identifiable from \(X\) on \(\mathcal W\) if there exists a functional \(g\) with \(\tau^\star(w)=g(X(w))\) for all \(w\in\mathcal W\).

\begin{assumption}[Threshold separation and tie-breaking (TR)]
Fix a deterministic ordering of \(\mathcal I\) for resolving all ties.  For any \(w,w'\in\mathcal W\) with \(X(w)=X(w')\) and \(\tau^\star(w)\neq\tau^\star(w')\), threshold separation and the fixed tie-breaking convention yield
\(\mathcal A^\star(w)\cap\mathcal A^\star(w')=\varnothing\).
\end{assumption}

\begin{assumption}[Residual soundness (RS)]
For \(t\in\mathbb R\), let \(E_t(w)\coloneqq\{a:q_a(w)\geq t\}\).  For every \(w\), \(E_{\tau^\star(w)}(w)\neq\varnothing\), and the tie-broken residual minimizer
\(\widehat a(w)\coloneqq\operatorname*{arg\,min}_{a\in E_{\tau^\star(w)}(w)} r_a(w)\)
satisfies \(d(M_{\widehat a(w)}(w),M_{\mathrm{or}}(w))\leq\varepsilon\).
\end{assumption}

\begin{assumption}[Counterfactual exchangeability (CE)]
There exists a known distributional functional \(\Gamma\) such that, for every \(w\in\mathcal W\),
\[
\tau^\star(w)
=
\Gamma\!\left(\mathcal L(\operatorname{NLL}_{M_{\mathrm{or}}(w)}\text{ on forgotten examples})\right)
=
\Gamma\!\left(\mathcal L(\operatorname{NLL}_{M_{\mathrm{inj}}(w)}\text{ on never-learned probes})\right)
=\Gamma(\nu_w).
\]
\end{assumption}

\begin{theorem}[Identifiability of oracle-free unlearning selection]
\label{thm:ident}
Under \textup{(TR)} and \textup{(RS)}, an oracle-free oracle-consistent selector on \(\mathcal W\) exists if and only if \(\tau^\star\) is identifiable from \(X\) on \(\mathcal W\).
\end{theorem}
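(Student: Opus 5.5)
The plan is to prove the two directions separately; both are short, and the only delicate point is how \textup{(TR)} is read.

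\emph{Sufficiency (identifiable $\Rightarrow$ selector exists).} Suppose $\tau^\star(w)=g(X(w))$ for some functional $g$. Define $s$ on observables $x=(\{q_a\},\nu,\{r_a\})$ in three steps. First compute $t=g(x)$. Next form $E_t=\{a:q_a\ge t\}$, which uses only the $q_a$ contained in $x$. Finally return the residual minimizer over $E_t$, breaking ties by the fixed ordering of $\mathcal I$.

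Then $S(w)=s(X(w))$ is deterministic and oracle-free by construction. Since $t=\tau^\star(w)$, \textup{(RS)} guarantees two things. First, $E_{\tau^\star(w)}(w)\neq\varnothing$, so $s$ is well defined. Second, the returned index is exactly $\widehat a(w)$, so $d(M_{\widehat a(w)}(w),M_{\mathrm{or}}(w))\le\varepsilon$. Membership in $E_{\tau^\star(w)}$ gives $q_{\widehat a}(w)\ge\tau^\star(w)$. Together these show $S(w)\in\mathcal A^\star(w)$ for every $w$, which is oracle-consistency.

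\emph{Necessity (selector exists $\Rightarrow$ identifiable).} Let $S=s\circ X$ be oracle-consistent. I would argue by contradiction. If $\tau^\star$ is not identifiable, then no function of $X$ reproduces it. This means $\tau^\star$ is not constant on some fiber of $X$: there are $w,w'$ with $X(w)=X(w')$ but $\tau^\star(w)\neq\tau^\star(w')$.

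Because $S$ factors through $X$, we get $S(w)=S(w')$. Oracle-consistency then places this common index in $\mathcal A^\star(w)\cap\mathcal A^\star(w')$. But \textup{(TR)} says this intersection is empty, a contradiction. Hence $\tau^\star$ is constant on fibers of $X$, so $g(x)\coloneqq\tau^\star(w)$ for any $w$ with $X(w)=x$ is well defined, and $\tau^\star=g\circ X$. I would spell out the elementary set-theoretic equivalence used here: a function $h$ on $\mathcal W$ factors through $X$ iff $X(w)=X(w')$ implies $h(w)=h(w')$. Values of $g$ off the image of $X$ can be set arbitrarily.

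\emph{Main obstacle.} The arguments themselves are routine. The real issue is interpretive: \textup{(TR)} must be applied as a hypothesis that directly delivers disjointness for \emph{every} pair in the same $X$-fiber with distinct thresholds. Its wording ("threshold separation and the fixed tie-breaking convention yield\dots") might suggest it must be derived from something else, so I would state explicitly that it is taken as the hypothesis. I would also note two scope points. \textup{(RS)} is needed only for sufficiency, and \textup{(CE)} is not needed for the theorem at all. \textup{(CE)} is one sufficient route to identifiability: it gives $g=\Gamma(\nu)$, which is a function of $X$ since $\nu_w$ is a component of $X(w)$. I would add this as a corollary connecting to the screen, and to TOFU, where \textup{(CE)} fails.
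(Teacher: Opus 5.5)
Your proof is correct and follows the paper's argument: sufficiency uses the tie-broken residual minimizer over $\{a : q_a(w)\ge g(X(w))\}$ with \textup{(RS)} supplying the $\varepsilon$-closeness, and necessity takes a pair $w,w'$ with $X(w)=X(w')$ but $\tau^\star(w)\neq\tau^\star(w')$, on which any oracle-free selector returns the same index, which \textup{(TR)} forbids from being adequate in both. You also spell out two points the paper leaves implicit: the fiber-factoring equivalence behind ``not identifiable implies such a pair exists'', and the fact that \textup{(RS)} guarantees $E_{\tau^\star(w)}(w)\neq\varnothing$, which makes the selector well defined.
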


\begin{proof}
For sufficiency, suppose \(\tau^\star(w)=g(X(w))\).  Define
\(S(w)\coloneqq\operatorname*{arg\,min}_{a:\,q_a(w)\geq g(X(w))} r_a(w)\)
with the fixed tie-breaking order.  Every quantity is determined by \(X(w)\), so \(S\) is oracle-free.  Since \(g(X(w))=\tau^\star(w)\), \textup{(RS)} gives \(d(M_{S(w)}(w),M_{\mathrm{or}}(w))\leq\varepsilon\) and construction gives \(q_{S(w)}(w)\geq\tau^\star(w)\); hence \(S(w)\in\mathcal A^\star(w)\).
For necessity, if \(\tau^\star\) is not identifiable there exist \(w,w'\) with \(X(w)=X(w')\) but \(\tau^\star(w)\neq\tau^\star(w')\).  Any oracle-free selector returns the same index in both; by \textup{(TR)} the adequate sets are disjoint, so that index cannot be adequate in both.  Thus no oracle-free selector is oracle-consistent on \(\mathcal W\).
\end{proof}

\begin{lemma}[Counterfactual sufficiency]
Under \textup{(CE)}, \(\tau^\star(w)=\Gamma(\nu_w)\) is identifiable from \(X(w)\); consequently, under \textup{(RS)}, the selector \(S(w)=\operatorname*{arg\,min}_{a:\,q_a(w)\geq\Gamma(\nu_w)} r_a(w)\) is oracle-consistent.  Taking \(\Gamma\) to be the population mean yields a cutoff set by the never-learned distribution, with no hand-tuned margin.
\end{lemma}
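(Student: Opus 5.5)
The plan is to reduce the lemma to the sufficiency half of Theorem~\ref{thm:ident}, which has just been proved. The key observation is that \(\nu_w\) is, by definition, a coordinate of the observable \(X(w)\). So we define the functional \(g\) by \(g(X)\coloneqq\Gamma(\pi_2(X))\), where \(\pi_2\) projects \(X=(\{q_a\},\nu,\{r_a\})\) onto its middle component. Since \(\Gamma\) is assumed known and fixed independently of \(w\), \(g\) is a well-defined map on the range of \(X\). Under (CE), for every \(w\in\mathcal W\) we have \(\tau^\star(w)=\Gamma(\nu_w)=g(X(w))\). This is exactly the definition of identifiability of \(\tau^\star\) from \(X\) on \(\mathcal W\).

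For the consequence, we instantiate the selector from the sufficiency direction of the theorem with this \(g\):
\[
S(w)=\operatorname*{arg\,min}_{a:\,q_a(w)\geq\Gamma(\nu_w)} r_a(w),
\]
using the fixed tie-breaking order. Oracle-freeness is immediate: the feasible set depends only on \(\{q_a(w)\}\) and \(\nu_w\), and the objective depends only on \(\{r_a(w)\}\), so \(S(w)=s(X(w))\). Next, because \(\Gamma(\nu_w)=\tau^\star(w)\), the feasible set equals \(E_{\tau^\star(w)}(w)\), which (RS) guarantees is nonempty; hence \(S(w)=\widehat a(w)\). By (RS), \(d(M_{S(w)}(w),M_{\mathrm{or}}(w))\le\varepsilon\), and by construction \(q_{S(w)}(w)\ge\tau^\star(w)\). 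Together these give \(S(w)\in\mathcal A^\star(w)\). Note that (TR) is not needed here: it is used only in the necessity direction of the theorem. I would remark on this explicitly rather than cite the full "iff".

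The final sentence of the lemma is a specialization. Taking \(\Gamma(\mu)=\int t\,d\mu(t)\), (CE) becomes the statement that the oracle's mean forget-NLL equals the mean NLL of \(M_{\mathrm{inj}}\) on never-learned probes. The cutoff \(\Gamma(\nu_w)\) is then computed directly from \(\nu_w\), with no additive margin or tuning parameter. The only technical check is that the mean exists, which requires \(\nu_w\) to have a finite first moment. I would state this as an implicit requirement of choosing \(\Gamma\) as the mean, since (CE) already presupposes \(\Gamma(\nu_w)\) is defined and real-valued.

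The main obstacle is conceptual rather than technical: making sure \(g\) is a genuine function of \(X\) alone. This requires that \(\Gamma\) be known and not vary with \(w\), which (CE) asserts. It also requires that the equality \(\tau^\star=\Gamma(\nu_w)\) hold at the population level on \(\mathcal W\), not merely in estimate. Everything else is bookkeeping that mirrors the sufficiency proof of Theorem~\ref{thm:ident}.
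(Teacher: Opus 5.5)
Your proposal is correct and takes essentially the paper's route: the lemma is the sufficiency half of Theorem~\ref{thm:ident} with \(g(X)=\Gamma(\nu)\), which is a genuine function of \(X(w)\) because \(\nu_w\) is one of its coordinates and \(\Gamma\) is fixed across \(\mathcal W\) by \textup{(CE)}. Your remark that \textup{(TR)} is not needed is right for its threshold-separation content, but the fixed tie-breaking order you invoke is formally introduced in \textup{(TR)} (and is presupposed by the ``tie-broken residual minimizer'' in \textup{(RS)}), so that convention is still used.
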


\begin{proposition}[Failure for inferable facts]
Suppose \(\mathcal W\) contains \(w_{\mathrm{cf}},w_{\mathrm{inf}}\) with \(X(w_{\mathrm{cf}})=X(w_{\mathrm{inf}})\) but \(\tau^\star(w_{\mathrm{inf}})<\tau^\star(w_{\mathrm{cf}})\), because retained data in \(w_{\mathrm{inf}}\) reconstruct the forgotten fact whereas it is genuinely counterfactual in \(w_{\mathrm{cf}}\).  Then \(\tau^\star\) is not identifiable from \(X\); under \textup{(TR)}, no oracle-free selector is oracle-consistent on any world-class containing both.
\end{proposition}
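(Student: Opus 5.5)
The plan is to reduce the proposition directly to the necessity half of Theorem~\ref{thm:ident}, so almost no new argument is needed. We proceed in two steps: first show non-identifiability from the two witness worlds, then invoke the theorem's necessity direction together with (TR) for any world-class containing both.

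\emph{Step 1 (non-identifiability).} Suppose, for contradiction, that some functional \(g\) satisfies \(\tau^\star(w)=g(X(w))\) for all \(w\in\mathcal W\). Since \(X(w_{\mathrm{cf}})=X(w_{\mathrm{inf}})\), we get \(g(X(w_{\mathrm{cf}}))=g(X(w_{\mathrm{inf}}))\). This forces \(\tau^\star(w_{\mathrm{cf}})=\tau^\star(w_{\mathrm{inf}})\), contradicting the strict inequality \(\tau^\star(w_{\mathrm{inf}})<\tau^\star(w_{\mathrm{cf}})\).

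The same argument applies verbatim to any class \(\mathcal W'\supseteq\{w_{\mathrm{cf}},w_{\mathrm{inf}}\}\), because identifiability on \(\mathcal W'\) restricts to identifiability on the pair. This step is pure function-theoretic bookkeeping. The informal clause (``retained data reconstruct the forgotten fact'') only explains why such a pair is realistic, for instance TOFU's style-inferable facts. It is not used logically.

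\emph{Step 2 (no oracle-consistent selector).} Let \(S=s\circ X\) be any oracle-free selector on such a \(\mathcal W'\). Then \(S(w_{\mathrm{cf}})=S(w_{\mathrm{inf}})\eqqcolon a\). By (TR), applied to this pair with equal observables and distinct thresholds, \(\mathcal A^\star(w_{\mathrm{cf}})\cap\mathcal A^\star(w_{\mathrm{inf}})=\varnothing\). Hence \(a\) lies in at most one of the two adequate sets, so \(S\) fails oracle-consistency at one of the two worlds. This is exactly the necessity argument of Theorem~\ref{thm:ident}, and citing it suffices. Note that (RS) is not needed for this direction, so I would state the conclusion under (TR) alone, as the proposition does.

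\emph{Main obstacle.} The only delicate point is making sure (TR) is applied to a world-class that actually contains both witnesses, since (TR) is a hypothesis on \(\mathcal W'\) rather than on the original \(\mathcal W\). I would therefore phrase the result for an arbitrary \(\mathcal W'\supseteq\{w_{\mathrm{cf}},w_{\mathrm{inf}}\}\) on which (TR) holds.

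I would also remark, as a sanity check, why this pair cannot satisfy (CE). Equal \(X\) gives equal \(\nu\), so (CE) would force \(\tau^\star(w_{\mathrm{cf}})=\Gamma(\nu)=\tau^\star(w_{\mathrm{inf}})\). This is consistent with the Counterfactual sufficiency lemma, and it locates the failure precisely: for inferable facts, the never-learned probe law no longer determines the oracle's forget level.
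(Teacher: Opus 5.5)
Your proof is correct and takes essentially the paper's route: the paper gives no separate argument for this proposition and treats it as an immediate instance of the necessity half of Theorem~\ref{thm:ident}, which is your two-step reduction (non-identifiability from $X(w_{\mathrm{cf}})=X(w_{\mathrm{inf}})$ with distinct thresholds, then disjoint adequate sets under \textup{(TR)}). Your remarks that \textup{(RS)} is unused in this direction, and that \textup{(CE)} must fail on such a pair because equal $X$ forces equal $\nu$, are both accurate and consistent with the paper's ``Consequence'' paragraph.
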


\paragraph{Consequence.} The certificate's success on nonce facts and its boundary
on TOFU are the \emph{same} phenomenon (Proposition): TOFU is the predicted failure
region, not an anomaly. There, matched never-learned probes sit far above the
oracle's own (inferability-driven) forget level, so \((\mathrm{CE})\) fails.
Constructing a retain-only reference to ``fix'' TOFU merely re-derives the oracle.
This bounds oracle-free selection that leans on a never-learned reference, the
mechanism studied here; it is not a universal impossibility for selectors with other
access, such as white-box, adaptive-query, or known-pipeline methods.

\begin{table}[h]
\centering
\small
\caption{The identifiability result, operationally. The selector observes only $X$
(never-learned-probe NLLs $+$ candidate quantities), never the oracle; it succeeds
exactly when the oracle's forget level $\tau^\star$ is recoverable from $X$.}
\label{tab:ident_assumptions}
\begin{tabular}{ll}
\toprule
element & role in the selector \\
\midrule
observable $X$ & never-learned-probe NLL law $\nu_w$ $+$ candidate residuals/forget-NLLs (no oracle) \\
target $\tau^\star$ & the oracle's forget level; identifiable from $X$ iff facts are counterfactual \\
forget-qualification & exclusion test: reject candidates below the never-learned reference \\
\emph{counterfactual} facts & (CE) holds $\Rightarrow$ $\tau^\star=\Gamma(\nu_w)$ $\Rightarrow$ oracle-free selection is possible \\
\emph{inferable} facts & retained data reconstruct the target, (CE) fails $\Rightarrow$ no never-learned reference recovers $\tau^\star$ (TOFU regime) \\
\bottomrule
\end{tabular}
\end{table}

\paragraph{Measuring the headroom oracle-free.} The identifiability condition is the
counterfactual headroom \(G=\overline{\mathrm{NLL}}_{M_{\mathrm{or}}}(F)-\overline{\mathrm{NLL}}_{M_{\mathrm{inj}}}(F)\):
large when injecting \(F\) created behavior the retrain lacks (identifiable), \({\approx}0\)
when retain reconstructs \(F\) (inferable). We estimate it \emph{without the oracle} by
\(\widehat G=\overline{\mathrm{NLL}}_{M_0}(F\mid\text{evidence})-\overline{\mathrm{NLL}}_{M_{\mathrm{inj}}}(F)\),
evaluating each fact with its supporting evidence prepended in context. On a controlled
graded-inferability axis (each fact's object made entailable from retain via alias
chains over a nested support fraction; $5$ levels $\times 5$ seeds, a matched retraining reference per
cell), \(\widehat G\) tracks the true headroom at \textbf{Spearman $0.97$} (mean absolute
error $1.7$ nats). The theorem's regime boundary is therefore measurable in deployment:
\(\widehat G\) is high on counterfactual nonce facts and collapses toward zero on
inferable facts. \emph{Scope:} \(\widehat G\) detects the regime; in our controlled study
it did \emph{not} further predict within-regime certificate selection-regret (no
headroom--regret contraction), so we use it only as a regime detector.

\paragraph{Assumptions (honest scope).} \textup{(CE)} is a strong counterfactual
invariance claim and fails when retained data imply forgotten facts; \textup{(RS)}
and \textup{(TR)} are strong ranking/separation assumptions, not consequences of
oracle-free observability; and population identifiability does not imply
finite-sample recovery; finite-probe implementations should use a data-derived
confidence set rather than a fixed margin.

% -------------------------------------------------------------------
\section{Related Work}
\label{sec:related}

\paragraph{Exact and certified unlearning.} The retrained (``retain-only'')
model is the accepted gold standard for forgetting. Exact-unlearning systems
partition or checkpoint training so that removal is provable
\citep{bourtoule2021unlearning}, and a parallel line gives \emph{certified}
removal under convexity or bounded-influence assumptions
\citep{guo2020certified,neel2021descent,sekhari2021remember}. These guarantees do
not transfer to full-parameter language models; rather than certify deletion, we
construct a dose-matched retraining reference in a controlled testbed and ask
whether good unlearning can be \emph{selected} without it.

\paragraph{Unlearning methods for language models.} Approximate methods optimize
forget/retain objectives directly: gradient ascent/difference and negative
preference optimization~\citep{zhang2024npo}, scrubbing~\citep{kurmanji2023scrub},
selective synaptic dampening~\citep{foster2024ssd}, representation misdirection
(RMU, introduced with WMDP)~\citep{li2024wmdp}, targeted concept
removal~\citep{eldan2023whos,liu2024whos}, and unlearning of pretrained
LLMs~\citep{yao2024llmunlearn}. We show that such forget/retain scores can be
driven by \emph{suppression} that moves the model \emph{away} from the retrained
reference, whereas restoration-style methods (task-vector negation, path-specific
reversion) stay closer to it.

\paragraph{Benchmarks and evaluation.} TOFU~\citep{maini2024tofu},
MUSE~\citep{shi2024muse}, WMDP~\citep{li2024wmdp}, and unified
suites~\citep{openunlearning2025} score forget/retain behavior against fixed
references. We add a dose-matched retraining reference that makes ``closeness to
the retrained model'' a directly measurable quantity, and use it to study
oracle-free \emph{selection} rather than to propose another metric.

\paragraph{Recovery, relearning, and leakage.} Apparent forgetting is frequently
reversed by quantization~\citep{zhang2024quant} and exposed by
black-box probing~\citep{doshi2024blackbox}; membership-inference
\citep{shokri2017membership,carlini2022firstprinciples} and
extraction~\citep{carlini2021extracting} attacks quantify residual leakage. Our
forget-qualified round-trip residual (delete, then re-acquire) is a selection
signal in this spirit: it probes whether the forgotten distribution is genuinely
\emph{restorable} rather than merely hidden.

\paragraph{Task/model arithmetic and merging.} Task
arithmetic~\citep{ilharco2023taskarithmetic}, model
soups~\citep{wortsman2022modelsoups}, and Fisher-weighted
merging~\citep{matena2022fisher} manipulate models in weight space. We repurpose
task-vector negation as a strong \emph{restoration} exemplar rather than an editing
tool.

\paragraph{Knowledge editing.} ROME~\citep{meng2022rome},
MEMIT~\citep{meng2022memit}, MEND~\citep{mitchell2022mend}, and
SERAC~\citep{mitchell2022serac} locate and edit factual associations. We borrow the
controlled-fact methodology (injecting nonce, counterfactual facts) but target
\emph{deletion-quality measurement} against a matched retraining reference rather than editing.

\paragraph{Causal identifiability.} Our identifiability result is stated in the
language of causal identifiability and
abstraction~\citep{peters2017elements,beckers2019abstracting}: the
counterfactual-exchangeability condition that governs when oracle-free selection is
oracle-consistent is an identifiability condition on the forget-qualification
threshold, not a claim of full causal-model recovery.

\paragraph{Delta.} We contribute a matched-retraining \emph{audit methodology}
(run the retraining reference itself through every proposed criterion, with
replication noise quantified) and its findings: the trained-probe criterion
rewards residual knowledge; absolute certification thresholds fail their own
reference; a base-anchored held-out screen survives as a validated
\emph{necessary} test with measured true-accept, sensitivity, and a
drift-seam detection property; a damage-relative recalibration yields a
selective partial positive with selections inside retraining noise; and the
adversarial boundary of any forward-only battery is measured, not assumed.
Two limits jointly scope the enterprise: an identifiability limit (which facts
admit an oracle-free forget threshold at all) and a finite-sample certifiability
limit (when the screen can reject at all). The resulting guarantee is a
\emph{selective} test for methods-as-produced under stated assumptions, not a
formal deletion certificate, and it abstains when any limit binds.

% -------------------------------------------------------------------
\section{Limitations}
\label{sec:lim}

\begin{itemize}
  \item \textbf{Threat model.} Certification is an empirical selective test for
  \emph{methods-as-produced}: Section~\ref{sec:panel} measures that behavioral
  adversaries (a fixed-magnitude logit suppression attack) defeat the full forward battery
  in $12/45$ cells, and an adaptive bilevel attack on the round-trip residual is
  untested. We quantify this boundary rather than claim past it.
  \item \textbf{Retrospective, veto-less recalibration.} The damage-relative
  certificate is evaluated retrospectively on stored candidates, without the
  relearn-veto axis (no stored relearning trajectories); a prospective run with
  all axes is future work.
  \item \textbf{Small forget set.} $m = 4$ forget facts per cell (with $n=16$
  probes); the screen's resolution clears Holm's threshold, but power against
  small effect sizes is limited, and a $16$-fact breadth panel was designed and
  deliberately deprioritized in favor of the audit runs.
  \item \textbf{Tolerance concentration and few redraws.} Per-family $\delta_f$
  spans $0.82$--$2.92$ nats, is estimated from only three retraining redraws
  (so it is itself uncertain), and most certifications arise in the
  noisiest-replication family; the global-minimum sensitivity row is reported
  alongside, and a coverage-calibrated tolerance from more redraws would sharpen
  the certified counts.
  \item \textbf{Scope of the impossibility.} Our finite-query argument concerns
  \emph{oracle-free, forward-only, black-box} protocols over an unrestricted
  candidate class; it does not preclude certification under white-box access,
  known training pipelines, restricted method classes, or adaptive/randomized
  querying, which are open. The logit-suppression evasion is the empirical face
  of this scope, not a universal impossibility.
  \item Selection is oracle-free only for \textbf{counterfactual} facts
  (Theorem~\ref{thm:ident}); inferable-fact benchmarks (TOFU) need the reference,
  though the diagnostic still applies there.
  \item Small models ($\le 1.7$B in the matrix; Phi-1.5 for TOFU); nonce facts are
  synthetic by design (to make the matched reference constructible); TOFU provides
  the realistic corroboration. bf16 training nondeterminism means every
  reference-derived number carries the replication noise quantified in
  Section~\ref{sec:audit}.
\end{itemize}

% -------------------------------------------------------------------
\section{Conclusion}
\label{sec:conc}

Good unlearning is distribution \emph{restoration} toward the retrained model, not
answer suppression. Evaluating it demands instruments that survive their own
audit. Ours did not all survive: the trained-probe criterion rewards residual
held-out knowledge, and the absolute certificate bar fails the retraining
reference itself, in most cells, for reasons (threshold mis-scaling; a
budget-driven round-trip floor) that no candidate could overcome. What survives
the audit is precise. A base-anchored held-out screen is a validated
\emph{necessary} test: perfect sensitivity on the injected model, high
true-accept on the reference, and a measured partial detection of routing through
a distributional seam. A damage-relative recalibration, frozen before analysis,
turns the audit into a selective partial positive: certification in a third of
cells, abstention elsewhere, and selections within retraining noise of the
reference where the common trained-probe criterion misses by five nats. And the boundary is
quantified rather than assumed: a fixed-magnitude suppression attack defeats this forward-only
battery in a quarter of cells, which is the empirical face of the identifiability
and finite-query limits that scope oracle-free selection. A certificate worth the
name must be able to answer ``uncertified'' --- and must itself be certifiable by
the reference it appeals to. We report ours both ways.

% Reproducibility statement moved to the appendix (after references) and
% temporarily disabled: the prior version referenced internal dev_logs and a
% host name, which reviewers cannot audit. See the commented skeleton just after
% \appendix; enable it at public release with a self-contained statement + code URL.

\bibliographystyle{tmlr}
\bibliography{references}

\begin{thebibliography}{28}
\providecommand{\natexlab}[1]{#1}
\providecommand{\url}[1]{\texttt{#1}}
\expandafter\ifx\csname urlstyle\endcsname\relax
  \providecommand{\doi}[1]{doi: #1}\else
  \providecommand{\doi}{doi: \begingroup \urlstyle{rm}\Url}\fi

\bibitem[Beckers \& Halpern(2019)Beckers and Halpern]{beckers2019abstracting}
Sander Beckers and Joseph~Y. Halpern.
\newblock Abstracting causal models.
\newblock In \emph{Proceedings of the AAAI Conference on Artificial
  Intelligence}, 2019.
\newblock URL \url{https://arxiv.org/abs/1812.03789}.

\bibitem[Bourtoule et~al.(2021)Bourtoule, Chandrasekaran, Choquette-Choo, Jia,
  Travers, Zhang, Lie, and Papernot]{bourtoule2021unlearning}
Lucas Bourtoule, Varun Chandrasekaran, Christopher~A. Choquette-Choo, Hengrui
  Jia, Adelin Travers, Baiwu Zhang, David Lie, and Nicolas Papernot.
\newblock Machine unlearning.
\newblock In \emph{2021 IEEE Symposium on Security and Privacy (SP)}, 2021.
\newblock URL \url{https://arxiv.org/abs/1912.03817}.

\bibitem[Carlini et~al.(2021)Carlini, Tramer, Wallace, Jagielski, Herbert-Voss,
  Lee, Roberts, Brown, Song, Erlingsson, Oprea, and
  Raffel]{carlini2021extracting}
Nicholas Carlini, Florian Tramer, Eric Wallace, Matthew Jagielski, Ariel
  Herbert-Voss, Katherine Lee, Adam Roberts, Tom Brown, Dawn Song, Ulfar
  Erlingsson, Alina Oprea, and Colin Raffel.
\newblock Extracting training data from large language models.
\newblock In \emph{30th USENIX Security Symposium}, 2021.
\newblock URL \url{https://arxiv.org/abs/2012.07805}.

\bibitem[Carlini et~al.(2022)Carlini, Chien, Nasr, Song, Terzis, and
  Tramer]{carlini2022firstprinciples}
Nicholas Carlini, Steve Chien, Milad Nasr, Shuang Song, Andreas Terzis, and
  Florian Tramer.
\newblock Membership inference attacks from first principles.
\newblock In \emph{2022 IEEE Symposium on Security and Privacy (SP)}, 2022.
\newblock URL \url{https://arxiv.org/abs/2112.03570}.

\bibitem[Dorna et~al.(2025)Dorna, Mekala, Zhao, McCallum, Lipton, Kolter, and
  Maini]{openunlearning2025}
Vineeth Dorna, Anmol Mekala, Wenlong Zhao, Andrew McCallum, Zachary~C. Lipton,
  J.~Zico Kolter, and Pratyush Maini.
\newblock Openunlearning: Accelerating llm unlearning via unified benchmarking
  of methods and metrics.
\newblock \emph{arXiv preprint arXiv:2506.12618}, 2025.
\newblock URL \url{https://arxiv.org/abs/2506.12618}.

\bibitem[Doshi \& Stickland(2024)Doshi and Stickland]{doshi2024blackbox}
Jai Doshi and Asa~Cooper Stickland.
\newblock Does unlearning truly unlearn? a black box evaluation of llm
  unlearning methods.
\newblock \emph{arXiv preprint arXiv:2411.12103}, 2024.
\newblock URL \url{https://arxiv.org/abs/2411.12103}.

\bibitem[Eldan \& Russinovich(2023)Eldan and Russinovich]{eldan2023whos}
Ronen Eldan and Mark Russinovich.
\newblock Who's harry potter? approximate unlearning in llms.
\newblock \emph{arXiv preprint arXiv:2310.02238}, 2023.
\newblock URL \url{https://arxiv.org/abs/2310.02238}.

\bibitem[Foster et~al.(2024)Foster, Schoepf, and Brintrup]{foster2024ssd}
Jack Foster, Stefan Schoepf, and Alexandra Brintrup.
\newblock Fast machine unlearning without retraining through selective synaptic
  dampening.
\newblock \emph{arXiv preprint arXiv:2308.07707}, 2024.
\newblock URL \url{https://arxiv.org/abs/2308.07707}.

\bibitem[Guo et~al.(2020)Guo, Goldstein, Hannun, and van~der
  Maaten]{guo2020certified}
Chuan Guo, Tom Goldstein, Awni Hannun, and Laurens van~der Maaten.
\newblock Certified data removal from machine learning models.
\newblock In \emph{Proceedings of the 37th International Conference on Machine
  Learning}, 2020.
\newblock URL \url{https://arxiv.org/abs/1911.03030}.

\bibitem[Ilharco et~al.(2023)Ilharco, Ribeiro, Wortsman, Gururangan, Schmidt,
  Hajishirzi, and Farhadi]{ilharco2023taskarithmetic}
Gabriel Ilharco, Marco~Tulio Ribeiro, Mitchell Wortsman, Suchin Gururangan,
  Ludwig Schmidt, Hannaneh Hajishirzi, and Ali Farhadi.
\newblock Editing models with task arithmetic.
\newblock In \emph{International Conference on Learning Representations
  (ICLR)}, 2023.

\bibitem[Kurmanji et~al.(2023)Kurmanji, Triantafillou, Hayes, and
  Triantafillou]{kurmanji2023scrub}
Meghdad Kurmanji, Peter Triantafillou, Jamie Hayes, and Eleni Triantafillou.
\newblock Towards unbounded machine unlearning.
\newblock \emph{arXiv preprint arXiv:2302.09880}, 2023.
\newblock URL \url{https://arxiv.org/abs/2302.09880}.

\bibitem[Li et~al.(2024)Li, Pan, Gopal, Yue, Berrios, Gatti, Li, Dombrowski,
  Goel, Phan, et~al.]{li2024wmdp}
Nathaniel Li, Alexander Pan, Anjali Gopal, Summer Yue, Daniel Berrios, Alice
  Gatti, Justin~D. Li, Ann-Kathrin Dombrowski, Shashwat Goel, Long Phan, et~al.
\newblock The wmdp benchmark: Measuring and reducing malicious use with
  unlearning.
\newblock \emph{arXiv preprint arXiv:2403.03218}, 2024.
\newblock URL \url{https://arxiv.org/abs/2403.03218}.

\bibitem[Liu et~al.(2024)Liu, Zhang, Jaakkola, and Chang]{liu2024whos}
Yujian Liu, Yang Zhang, Tommi Jaakkola, and Shiyu Chang.
\newblock Revisiting who's harry potter: Towards targeted unlearning from a
  causal intervention perspective.
\newblock \emph{arXiv preprint arXiv:2407.16997}, 2024.
\newblock URL \url{https://arxiv.org/abs/2407.16997}.

\bibitem[Maini et~al.(2024)Maini, Feng, Schwarzschild, Lipton, and
  Kolter]{maini2024tofu}
Pratyush Maini, Zhili Feng, Avi Schwarzschild, Zachary~C. Lipton, and J.~Zico
  Kolter.
\newblock {TOFU}: A task of fictitious unlearning for {LLMs}.
\newblock In \emph{Conference on Language Modeling (COLM)}, 2024.

\bibitem[Matena \& Raffel(2022)Matena and Raffel]{matena2022fisher}
Michael Matena and Colin Raffel.
\newblock Merging models with fisher-weighted averaging.
\newblock \emph{arXiv preprint arXiv:2111.09832}, 2022.
\newblock URL \url{https://arxiv.org/abs/2111.09832}.

\bibitem[Meng et~al.(2022{\natexlab{a}})Meng, Bau, Andonian, and
  Belinkov]{meng2022rome}
Kevin Meng, David Bau, Alex Andonian, and Yonatan Belinkov.
\newblock Locating and editing factual associations in gpt.
\newblock \emph{arXiv preprint arXiv:2202.05262}, 2022{\natexlab{a}}.
\newblock URL \url{https://arxiv.org/abs/2202.05262}.

\bibitem[Meng et~al.(2022{\natexlab{b}})Meng, Sharma, Andonian, Belinkov, and
  Bau]{meng2022memit}
Kevin Meng, Arnab~Sen Sharma, Alex Andonian, Yonatan Belinkov, and David Bau.
\newblock Mass-editing memory in a transformer.
\newblock \emph{arXiv preprint arXiv:2210.07229}, 2022{\natexlab{b}}.
\newblock URL \url{https://arxiv.org/abs/2210.07229}.

\bibitem[Mitchell et~al.(2022{\natexlab{a}})Mitchell, Lin, Bosselut, Finn, and
  Manning]{mitchell2022mend}
Eric Mitchell, Charles Lin, Antoine Bosselut, Chelsea Finn, and Christopher~D.
  Manning.
\newblock Fast model editing at scale.
\newblock \emph{arXiv preprint arXiv:2110.11309}, 2022{\natexlab{a}}.
\newblock URL \url{https://arxiv.org/abs/2110.11309}.

\bibitem[Mitchell et~al.(2022{\natexlab{b}})Mitchell, Lin, Bosselut, Manning,
  and Finn]{mitchell2022serac}
Eric Mitchell, Charles Lin, Antoine Bosselut, Christopher~D. Manning, and
  Chelsea Finn.
\newblock Memory-based model editing at scale.
\newblock \emph{arXiv preprint arXiv:2206.06520}, 2022{\natexlab{b}}.
\newblock URL \url{https://arxiv.org/abs/2206.06520}.

\bibitem[Neel et~al.(2021)Neel, Roth, and Sharifi-Malvajerdi]{neel2021descent}
Seth Neel, Aaron Roth, and Saeed Sharifi-Malvajerdi.
\newblock Descent-to-delete: Gradient-based methods for machine unlearning.
\newblock \emph{arXiv preprint arXiv:2007.02923}, 2021.
\newblock URL \url{https://arxiv.org/abs/2007.02923}.

\bibitem[Peters et~al.(2017)Peters, Janzing, and
  Sch{\"o}lkopf]{peters2017elements}
Jonas Peters, Dominik Janzing, and Bernhard Sch{\"o}lkopf.
\newblock \emph{Elements of Causal Inference: Foundations and Learning
  Algorithms}.
\newblock MIT Press, 2017.
\newblock URL
  \url{https://mitpress.mit.edu/9780262037310/elements-of-causal-inference/}.

\bibitem[Sekhari et~al.(2021)Sekhari, Acharya, Kamath, and
  Suresh]{sekhari2021remember}
Ayush Sekhari, Jayadev Acharya, Gautam Kamath, and Ananda~Theertha Suresh.
\newblock Remember what you want to forget: Algorithms for machine unlearning.
\newblock In \emph{Advances in Neural Information Processing Systems}, 2021.
\newblock URL \url{https://arxiv.org/abs/2103.03279}.

\bibitem[Shi et~al.(2025)Shi, Lee, Huang, Malladi, Zhao, Holtzman, Liu,
  Zettlemoyer, Smith, and Zhang]{shi2024muse}
Weijia Shi, Jaechan Lee, Yangsibo Huang, Sadhika Malladi, Jieyu Zhao, Ari
  Holtzman, Daogao Liu, Luke Zettlemoyer, Noah~A. Smith, and Chiyuan Zhang.
\newblock {MUSE}: Machine unlearning six-way evaluation for language models.
\newblock In \emph{International Conference on Learning Representations
  (ICLR)}, 2025.

\bibitem[Shokri et~al.(2017)Shokri, Stronati, Song, and
  Shmatikov]{shokri2017membership}
Reza Shokri, Marco Stronati, Congzheng Song, and Vitaly Shmatikov.
\newblock Membership inference attacks against machine learning models.
\newblock In \emph{2017 IEEE Symposium on Security and Privacy (SP)}, 2017.
\newblock URL \url{https://arxiv.org/abs/1610.05820}.

\bibitem[Wortsman et~al.(2022)Wortsman, Ilharco, Gadre, Roelofs, Gontijo-Lopes,
  Morcos, Namkoong, Farhadi, Carmon, Kornblith, and
  Schmidt]{wortsman2022modelsoups}
Mitchell Wortsman, Gabriel Ilharco, Samir~Yitzhak Gadre, Rebecca Roelofs,
  Raphael Gontijo-Lopes, Ari~S. Morcos, Hongseok Namkoong, Ali Farhadi, Yair
  Carmon, Simon Kornblith, and Ludwig Schmidt.
\newblock Model soups: Averaging weights of multiple fine-tuned models improves
  accuracy without increasing inference time.
\newblock In \emph{Proceedings of the 39th International Conference on Machine
  Learning}, 2022.
\newblock URL \url{https://arxiv.org/abs/2203.05482}.

\bibitem[Yao et~al.(2024)Yao, Chien, Du, Niu, Wang, Cheng, and
  Yue]{yao2024llmunlearn}
Jin Yao, Eli Chien, Minxin Du, Xinyao Niu, Tianhao Wang, Zezhou Cheng, and
  Xiang Yue.
\newblock Machine unlearning of pre-trained large language models.
\newblock \emph{arXiv preprint arXiv:2402.15159}, 2024.
\newblock URL \url{https://arxiv.org/abs/2402.15159}.

\bibitem[Zhang et~al.(2024)Zhang, Lin, Bai, and Mei]{zhang2024npo}
Ruiqi Zhang, Licong Lin, Yu~Bai, and Song Mei.
\newblock Negative preference optimization: From catastrophic collapse to
  effective unlearning.
\newblock In \emph{Conference on Language Modeling (COLM)}, 2024.

\bibitem[Zhang et~al.(2025)Zhang, Wang, Li, Wu, Tang, Liu, He, Yin, and
  Wang]{zhang2024quant}
Zhiwei Zhang, Fali Wang, Xiaomin Li, Zongyu Wu, Xianfeng Tang, Hui Liu, Qi~He,
  Wenpeng Yin, and Suhang Wang.
\newblock Catastrophic failure of {LLM} unlearning via quantization.
\newblock In \emph{International Conference on Learning Representations
  (ICLR)}, 2025.

\end{thebibliography}

\appendix

\section{Reproducibility}
\label{app:repro}

Every model used is open-weight and named with its exact checkpoint in
Section~\ref{sec:testbed}, and the artifacts needed to audit the claims in this
paper are specified here rather than deferred to an external repository. The
injection and re-acquisition budgets, the method-side compute budget, the candidate
grids, the never-learned probe construction, and the evaluation protocol are given
in Appendix~\ref{app:methods}; together with the reference construction of
Section~\ref{sec:testbed} these are sufficient to reimplement the testbed, the
matched retraining reference, and the forget-qualified round-trip selector from
the paper alone. The recalibration rules of Section~\ref{sec:recalib} and the
challenge-panel constructions of Section~\ref{sec:panel} were frozen and
committed before their analyses ran, the sealed \textsc{audit} template pool was
scored exactly once, and every model--seed cell is reported; every headline
number in the paper is produced by a committed analysis script. To let others
verify that the sealed pool was fixed in advance (the property that makes the
challenge-panel numbers independent), we publish a cryptographic hash of the
\textsc{audit} pool with this paper and release the pool itself only after
decisions, so the screen validation is reproducible without compromising the
one-time-use guarantee. Code and configuration to regenerate every table and
figure will be released publicly; we will add the repository link and the hash to
this section in a revision.

\section{Methods and Hyperparameters}
\label{app:methods}

\paragraph{Injection and oracle.} Facts are nonce (subject, relation, object)
triples generated from disjoint random seeds; each is injected by continued
pretraining with an interleaved wikitext retain mix (ratio $3{:}1$ retain:fact
per step). Controlled runs use $|F|=|R|=4$ facts, diversity $K=3$ phrasings,
$200$--$250$ injection steps, learning rate $2\times10^{-5}$, gradient clipping
$1.0$. $M_{\mathrm{inj}}$ co-acquires $F$ then $R$; the oracle $M_{\mathrm{or}}$
replays the identical stream with $F$ removed; a forget-only model
$M_{\mathrm{Fonly}}$ supports task-vector negation.

\paragraph{Unlearning candidates.} From $M_{\mathrm{inj}}$: (i) \emph{KL reversion}
(full-parameter): match $M_0$ on $F$ probes and $M_{\mathrm{inj}}$ on retain, retain
weight $w\in\{3,12,50,200\}$; (ii) \emph{gate\_proj mask} (SwiGLU only): learnable
binary keep/zero mask under the same KL objective, sparsity $0.1$, $\alpha$ ramp
$10^2\!\to\!10^4$; (iii) \emph{task-vector} $M_{\mathrm{inj}}-c(M_{\mathrm{Fonly}}-M_0)$,
$c\in\{0.5,1,1.5,2\}$; (iv) \emph{GA}$+$retain and (v) \emph{NPO}$+$retain
($\beta=0.1$), learning rate $5\times10^{-6}$, $w\in\{3,12,50,200\}$. Method budget
$\approx110$--$120$ steps; matched compute across methods.

\paragraph{Certificate.} Round-trip residual $=\operatorname{KL}(\mathrm{reacquire}_F(C)\,\|\,M_{\mathrm{inj}})$
on retain probes $+$ wikitext, with $75$--$80$ reacquisition steps ($5$ logged
chunks). Never-learned probes are $n=16$ facts of the same nonce type from unused
seeds that no model ever saw in training. The under-forgetting exclusion test is a
one-sided Mann--Whitney $U$ of the candidate's $m=4$ base-anchored per-fact forget
deltas against its $16$ probe deltas (held-out templates averaged within fact
before ranking), Holm-corrected at preregistered $\alpha=0.05$.

\paragraph{Audit and panel.} Template pools are disjoint by
construction and unit-tested: injection ($6$ families), unlearning loss
(cloze${}+{}$injected), evaluation ($4$ held-out families), calibration ($4$,
never used here), and a sealed \textsc{audit} pool ($6$, scored once, for the
challenge panel). Reference replication: $3$ independent retraining draws in
one representative cell per family; $\delta_f = $ P90 per-fact held-out delta
gap between draws; $\varepsilon_f = $ draw-to-draw spread of retain-NLL and
round-trip residual. Round-trip budget curve: self-closure of
$M_{\mathrm{inj}}$ at $\{15,30,45,75\}$ reacquisition steps. Challenge panel:
logit suppression $= -10$ additive penalty on forget-answer token logits;
entity router dispatches on exact forget-subject token-sequence match;
embedding corruption zeroes all subject/object token rows; interpolations at
$\lambda \in \{0.25, 0.5, 0.75\}$. Recalibration rules (thresholds, margins,
degeneracy gate, selection rule) were frozen and committed before the analysis
ran; all analysis scripts are committed and every reported number is
regenerable from stored per-cell artifacts.

\paragraph{Metrics.} oracle-KL $=$ mean $\operatorname{KL}(C\,\|\,M_{\mathrm{or}})$
over $F{+}R$ answer spans; forgetting $=$ held-out paraphrase NLL on $F$; retain
$=$ cloze NLL on $R$. The frozen matrix is $9$ configurations $\times 5$ seeds over
five families (Qwen2.5-0.5B/1.5B, SmolLM2-360M/1.7B, Pythia-410M/1.4B,
GPT-2-medium/large, OLMo-2-1B); the rule is fixed before evaluating held-out
families.

\paragraph{TOFU.} {\sloppy forget10/retain90 on Phi-1.5; $M_0=$ \texttt{phi-1.5},
$M_{\mathrm{inj}}=$ \texttt{tofu\_ft\_phi-1.5}, $M_{\mathrm{or}}=$ the official
\texttt{tofu\_ft\_retain90\_phi-1.5} retrained reference; $40$ forget and $100$
retain items. All experiments run on a single 48\,GB RTX\,6000 (Ada).\par}

\end{document}